\newtheorem{lemma}{Lemma}
\newtheorem*{lemma_new}{Lemma}
\newtheorem*{theorem_new}{Theorem}
\DeclareMathOperator*{\argmin}{argmin}
\newtheorem{theorem}{Theorem}
\newcommand{\ea}[1]{\begin{equation}\begin{aligned}\centering #1 \end{aligned}\end{equation}}
\title{Off-policy Maximum Entropy Reinforcement Learning : Soft Actor-Critic with Advantage Weighted Mixture Policy(SAC-AWMP)}
\author{
Zhimin Hou$^{1 *}$\and
Kuangen Zhang$^2$\footnote{Equal contribution}\and
Yi Wan$^{3}$\and
Dongyu Li$^{1}$\and
Chenglong Fu$^2$\And
Haoyong Yu$^1$\\
\affiliations
$^1$ Department of Biomedical Engineering, National University of Singapore\\
$^2$ Department of Mechanical and Energy Engineering, Southern University of Science and Technology\\
$^3$ Reinforcement Learning and Artificial Intelligence Laboratory, University of Alberta\\
\emails
\{biehz, levyli, bieyhy\}@nus.edu.sg,
\{zhangkn, fucl\}@sustech.edu.cn,
\{wan6\}@ualberta.ca
}
\begin{document}

\maketitle

\begin{abstract}
The optimal policy of a reinforcement learning problem is often discontinuous and non-smooth. I.e., for two states with similar representations, their optimal policies can be significantly different. In this case, representing the entire policy with a function approximator (FA) with shared parameters for all states may be not desirable, as the generalization ability of parameters sharing makes representing discontinuous, non-smooth policies difficult. A common way to solve with this problem, known as Mixture-of-Experts, is to represent the policy as the weighted sum of multiple components, where different components perform well on different parts of the state space. Following this idea and inspired by a recent work called advantage-weighted information maximization, we propose to learn for each state weights of these components, so that they entail the information of the state itself and also the preferred action learned so far for the state. The action preference is characterized via the advantage function. In this case, the weight of each component would only be large for certain group of states whose representations are similar and preferred action representations are also similar. Therefore each component is easy to be represented. We call a policy parameterized in this way an \emph{Advantage Weighted Mixture Policy} (AWMP) and apply this idea to improve soft-actor-critic (SAC), one of the most competitive continuous control algorithm. Experimental results demonstrate that SAC with AWMP clearly outperforms SAC in four commonly used continuous control tasks and achieve stable performance across different random seeds. 
\end{abstract}

\section{Introduction}
Many interesting reinforcement learning(RL) problems has large state space such as go\cite{silver2017mastering}, atari games\cite{mnih2015human} and robotic manipulation control \cite{levine2016end,zhimin_2018_TII}. 
For these problems, it is impossible for the agent to visit all states and therefore tabular approaches in classic reinforcement learning would not be helpful \cite{sutton2018reinforcement,yi2019expectation_model}.  
It is therefore necessary for the agent to use function approximation (FA) to represent its value, policy or model and share parameters for different inputs. Sharing parameters gives the FA generalization ability, which loosely speaking, means that the FA's outputs are similar given similar inputs. The generalization ability makes it possible to generate reliable outputs for inputs that are not used to train the FA. Meanwhile, it also makes it hard to approximate functions that are discontinuous, or not smooth. 

A method called Mixture-of-Experts (MoE)\cite{xu1995alternative} was commonly used to deal with the above issue. 
This method approximates a function with a group of experts. 
Each expert only approximates the function locally. 
The rationale is that although the function to be approximated may be discontinuous and non-smooth for all possible inputs, it could still be smooth and continuous for a some inputs and therefore easy to be represented only for those inputs. As long as the ensemble of experts covers the entire input space, the approximated function can then be the weighted sum of these experts \cite{shazeer2017outrageously}. 
The only unsolved problem is the determination of weights of experts, for different inputs. A reasonable idea is each expert's weight should be high only for a part of the input space where the function is easy to be represented. 

In this paper, we use MoE to represent the agent's policy. 
For concreteness, let's call each expert a \emph{policy component} and the entire policy the \emph{mixture policy}. 
The weights of policy components are learned by a method called advantage-weighted information maximization, which cleverly assigns weights so that each policy component is simple to represent. This method was originally proposed in \cite{osa2019hierarchical} as a way to learn policy-over-options for temporal abstraction\cite{bacon2017option}. 
However, we note that this method by itself is a way to generate for each state a probability distribution and is not limited to the use of temporal abstraction. In this paper, we apply it to learn the weights of the mixture policy. We call a mixture policy whose weights are learned via advantage-weighted information maximization an \emph{advantage weighted mixture policy} (AWMP). 

In AWMP, given a state, the weights of policy components are generated by a neural network, called the \emph{prior network} that takes the given state as input. 
Parameters of the prior network are learned through maximizing the mutual information(MI)\cite{chen2016infogan,houthooft2016vime} between the state-action pair under the policy induced by the advantage function learned so far and the policy component sampled from the probability distribution induced by the weights of policy components. 
Due to the generalization ability of the prior network, for similar state-action pair inputs, the corresponding weights output would also likely to be similar. 
Meanwhile, in order to maximize the MI, state-action pairs whose representations are different by large degree are likely to produce different weights.  
In this way, each policy component would have high weight, only for a group of states whose representations are similar and representations of preferred actions for these states, learned so far, are also similar. 
For this reason, the policy component for these states is much simpler and requires less capacity to represent.
\begin{figure}[!t]
\centering
\includegraphics[width=3.5in]{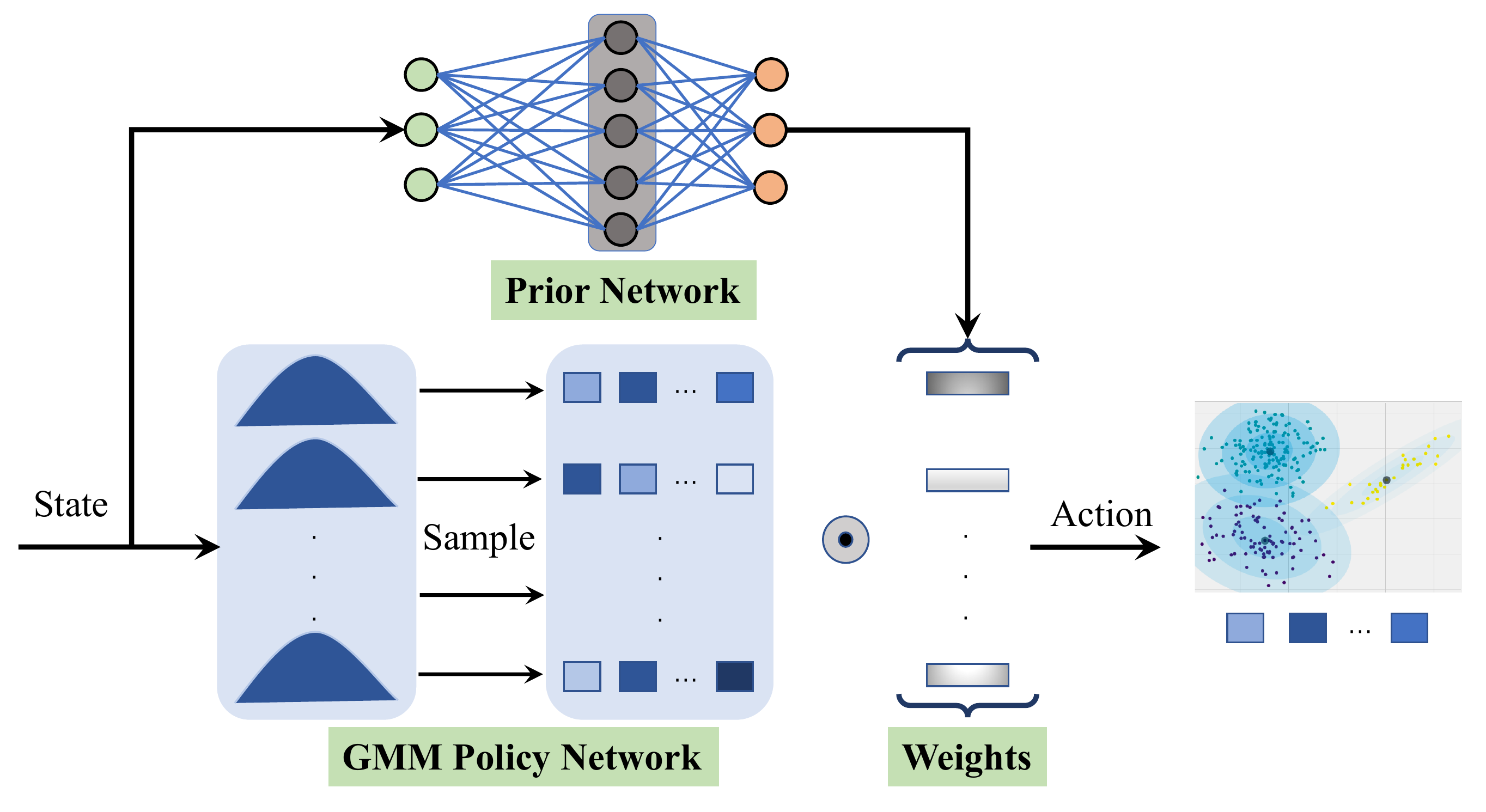} 
\caption{Systematic of AWMP}  
\label{skematic_gmm_policy}
\end{figure}

AWMP in general can be combined with any policy-based RL algorithm, including both state-of-the-art on-policy method PPO\cite{schulman2017proximal} and the most popular off-policy method TD3\cite{fujimoto2018addressing}. 
In this paper, we combine it with one of the state-of-the-art off-policy continuous control algorithm, soft-actor-critic(SAC)\cite{haarnoja2018soft}, which has achieved the start-of-the-art efficiency and stability performance on several continuous control Mujoco tasks and challenging real-world robotic tasks\cite{haarnoja2018soft_applications}. 
SAC aims to learn a stochastic Gaussian policy based on the maximum entropy objective through maximizing both the given reward and an augmented entropy term \cite{ziebart2008maximum}. 
Therefore, as in Figure~\ref{skematic_gmm_policy}, we propose a Gaussian mixture policy with several stochastic policy components as the mixture policy, each policy component estimates an independent Gaussian policy. 
The resulting algorithm is called soft-actor-critic with advantage-weighted mixture policy(SAC-AWMP). 
We show empirically the resulting algorithm clearly outperforms the standard SAC and TD3 in four commonly used continuous control domains, in terms of both the \emph{learning efficiency} and the \emph{stability}. 
The rest of the paper is arranged as follows: section.~\ref{sec:background} introduces the necessary background, including maximum entropy RL and mutual information, section.~\ref{sec:algorithm} first introduce the prior network and SAC-AWMP algorithm.   
In section.~\ref{sec:experiments} we empirically compare SAC-AWMP with the standard SAC, which shows the proposed SAC-AWMP could improve the performance of SAC.

\section{Background}\label{sec:background}
In this section, we define the notation and derive the soft policy iteration of maximum entropy RL. 

\subsection{Preliminaries}
The tasks with continuous state and action space addressed by RL generally is formulated as a MDP $\big(\mathcal{S}, \mathcal{A}, T, \mathcal{R}\big)$, consist of a state space $\mathcal{S}$, a action space $\mathcal{A}$, a state transition function $T: \mathcal{S} \times \mathcal{A} \rightarrow \mathcal{S}$ and a reward function $\mathcal{R}: \mathcal{S} \times \mathcal{A} \rightarrow \mathbb{R}$. 
At each environment step $t$, based on the state of environment $\boldsymbol{s}_t \in \mathcal{S}$, the agent select an action $\boldsymbol{a}_t \in \mathcal{A}$ generated by the policy $\pi(\boldsymbol{a}_t | \boldsymbol{s}_t): \mathcal{S} \rightarrow \mathcal{A}$, 
then the agent will receive a reward $R_{t+1}:  \mathbb{E}[R_{t+1}] = \mathcal{R}(\boldsymbol{s}_t, \boldsymbol{a}_t)
$ and the environment transit to next state $\boldsymbol{s}_{t+1} \in \mathcal{S}$.  
A trajectory denotes as $\tau = (\boldsymbol{s}_0, \boldsymbol{a}_0, \cdots, \boldsymbol{s}_T)$ given an initial state distribution $d_0$, which starts from an initial state $\boldsymbol{s}_0 \thicksim d_0(\boldsymbol{s}_0)$ and follows the action under the policy $\pi(\cdot | \cdot)$.  
The standard RL objective to learn the policy is maximizing the received expected return $\mathcal{J} = \mathbb{E}_{\tau}[G_t]$, return $G_t = \sum_{i=t}^T R_t$ denotes the cumulative reward of one episode from $t$ to the terminal time step $T$.  

\subsection{Maximum entropy RL}
Compared to the standard RL objective, the maximum entropy RL objective augmented with the the entropy of the stochastic policy is formulated as:
\begin{equation}
    \mathcal{J} (\pi) = \int \int \mathcal{P}_{\pi}(\boldsymbol{a}, \boldsymbol{s}) \left( Q^{\pi} (\boldsymbol{s}, \boldsymbol{a}) + \alpha \mathcal{H}(\pi(\cdot | \boldsymbol{s}))\right) d\boldsymbol{a}d\boldsymbol{s} 
\end{equation}
where $Q^{\pi} (\boldsymbol{s}, \boldsymbol{a}) : \mathcal{S} \times \mathcal{A} \rightarrow \mathbb{R}$ denotes the action value function to approximate the expected return. $\mathcal{P}_{\pi}(\boldsymbol{a}, \boldsymbol{s}) = d_0^{\pi}(\boldsymbol{s}) \pi(\boldsymbol{a}|\boldsymbol{s})$ denotes the probability of state-action pair tracking the trajectory induced by policy $\pi$. 
$d_0^{\pi}(\boldsymbol{s})$ denotes the visitation frequency of $\boldsymbol{s}$. 
$\alpha$ denotes the temperature to balance the importance of the stochasticity of the optimal policy against the cumulative reward.   

\subsection{Soft policy iteration}
To learn the optimal maximum entropy policy with the convergence guarantee, 
soft policy iteration is derived similar as in \cite{haarnoja2018soft}, which repeats soft policy evaluation and soft policy improvement alternately. 
In soft policy evaluation iteration, given a fixed policy $\pi$, the soft action value $Q^{\pi}$ is calculated iteratively via a designed soft Bellman backup operator 
$\mathcal{T}^{\pi}$ as: 
\begin{equation}\label{equ:policy_evaluation_update}
\begin{aligned}
    \mathcal{T}^{\pi} Q^{\pi}(\boldsymbol{s}_t, \boldsymbol{a}_t) &\triangleq \mathcal{R}^{\pi}(\boldsymbol{s}_t, \boldsymbol{a}_t)\\
    &+ \gamma \mathbb{E}_{\boldsymbol{s}_{t +1}, \boldsymbol{a}_{t +1} \thicksim \mathcal{P}_{\pi}}
\left[
Q^{\pi}(\boldsymbol{s}_{t +1}, \boldsymbol{a}_{t+1})
\right]
\end{aligned} 
\end{equation}
where $ \mathcal{R}^{\pi}(\boldsymbol{s}_t, \boldsymbol{a}_t) \triangleq \mathcal{R} (\boldsymbol{s}_t, \boldsymbol{a}_t) + \mathbb{E}_{\boldsymbol{s}_{t+1} \thicksim \mathcal{P}_{\pi}}[\mathcal{H}(\pi(\cdot | \boldsymbol{s}_{t+1}))]$ denotes the entropy reward; $\gamma$ denotes a practical discount factor. 

\begin{lemma}
\label{lemma:soft_policy_evaluation}
(Soft Policy Evaluation). 
Consider the soft Bellman backup operator $\mathcal{T}^{\pi}$ and the initial soft action value $Q^{\pi_0}$: $\mathcal{S} \times \mathcal{A} \rightarrow \mathbb{R}$  with $|\mathcal{A}| < \infty$. 
Define $Q^{k+1} = \mathcal{T}^{\pi} Q^{k}$, as $k \rightarrow \infty$, $Q^k$ will converge to the soft action value $Q^{\pi}$ of $\pi$. 
\end{lemma}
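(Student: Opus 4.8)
The plan is to recognize this as a fixed-point result and establish it through the Banach contraction mapping theorem applied to $\mathcal{T}^{\pi}$ under the supremum norm. The first step is to fold the entropy bonus into an augmented reward: treating $\mathcal{R}^{\pi}(\boldsymbol{s}_t, \boldsymbol{a}_t)$ exactly as defined preceding the statement, I would observe that since $|\mathcal{A}| < \infty$ the policy entropy $\mathcal{H}(\pi(\cdot | \boldsymbol{s}))$ is uniformly bounded, hence $\mathcal{R}^{\pi}$ is bounded. With this, $\mathcal{T}^{\pi}$ takes precisely the form of an ordinary discounted Bellman backup operator whose reward happens to be $\mathcal{R}^{\pi}$ instead of $\mathcal{R}$, so the classical convergence theory for policy evaluation transfers directly.

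The key step is to show $\mathcal{T}^{\pi}$ is a $\gamma$-contraction. I would take any two bounded candidate action-value functions $Q_1$ and $Q_2$ and note that the augmented reward term does not depend on $Q$, so it cancels in the difference, leaving
\begin{equation*}
\mathcal{T}^{\pi}Q_1(\boldsymbol{s}_t,\boldsymbol{a}_t) - \mathcal{T}^{\pi}Q_2(\boldsymbol{s}_t,\boldsymbol{a}_t) = \gamma\,\mathbb{E}_{\boldsymbol{s}_{t+1},\boldsymbol{a}_{t+1}\thicksim\mathcal{P}_{\pi}}\!\left[Q_1(\boldsymbol{s}_{t+1},\boldsymbol{a}_{t+1}) - Q_2(\boldsymbol{s}_{t+1},\boldsymbol{a}_{t+1})\right].
\end{equation*}
Bounding the integrand pointwise by $\|Q_1 - Q_2\|_\infty$ and using that the expectation is taken against a probability measure then yields $\|\mathcal{T}^{\pi}Q_1 - \mathcal{T}^{\pi}Q_2\|_\infty \le \gamma \|Q_1 - Q_2\|_\infty$, the desired contraction estimate.

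The final step is to invoke the contraction mapping theorem: since $\gamma < 1$, $\mathcal{T}^{\pi}$ is a contraction on the complete metric space of bounded real-valued functions on $\mathcal{S}\times\mathcal{A}$ equipped with the sup-norm, so it possesses a unique fixed point, and the iterates $Q^{k+1} = \mathcal{T}^{\pi}Q^{k}$ converge to it from the bounded initialization $Q^{\pi_0}$; identifying this fixed point with $Q^{\pi}$ closes the argument. The only place demanding genuine care is the boundedness of $\mathcal{R}^{\pi}$, which is exactly why the hypothesis $|\mathcal{A}| < \infty$ is imposed: it prevents the entropy bonus from blowing up and guarantees the iterates stay within the function space on which the contraction argument and its completeness assumption are valid. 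The contraction inequality itself, by contrast, is routine once the entropy term has been absorbed into the reward.
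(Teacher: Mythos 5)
Your proposal is correct and follows essentially the same route as the paper: absorb the entropy bonus into an augmented reward $\mathcal{R}^{\pi}$, use $|\mathcal{A}| < \infty$ to guarantee this reward is bounded, and then appeal to the standard policy-evaluation convergence argument. The paper simply cites the classical result at that point, whereas you spell out the underlying $\gamma$-contraction and Banach fixed-point reasoning explicitly; the substance is identical.
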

\begin{proof}
With the assumption $|\mathcal{A}| < \infty$ to guarantee the entropy augmented reward $\mathcal{R}^{\pi}(\boldsymbol{s}_t, \boldsymbol{a}_t)$ is bounded, then the convergence of soft policy evaluation updated as in Equation.(\ref{equ:policy_evaluation_update}) can be proofed as standard policy evaluation\cite{sutton2018reinforcement}.  
\end{proof}

In soft policy improvement iteration, the policy $\pi_{k+1}$ is updated to minimize the Kullback-Leibler(KL)
divergence between the next policy and the target distribution induced by the soft action value function $Q^{\pi_k}$, as:
\begin{equation}\label{equ:soft_policy_improvement}
    \pi_{k+1} = \argmin_{\pi^{\prime} \in \Pi} \rm{D_{KL}} \left( \pi^{\prime}(\cdot|\boldsymbol{s}_t) \Big| \Big| \frac{\exp(Q^{\pi_k}(\boldsymbol{s}_t, \cdot)) }{\Phi^{\pi_k} (\boldsymbol{s}_t)} \right) 
\end{equation}
\begin{lemma}
\label{lemma:soft_policy_improvement}
(Soft Policy Improvement). 
Consider $\pi_k \in \Pi$ and let $\pi_{k+1}$ be the optimizer of the minimization objective in Equation.~(\ref{equ:soft_policy_improvement}).  
Then $Q^{\pi_{k+1}}(\boldsymbol{s}_t, \boldsymbol{a}_t) \ge Q^{\pi_k}(\boldsymbol{s}_t, \boldsymbol{a}_t)$ for all the $(\boldsymbol{s}_t, \boldsymbol{a}_t) \in \mathcal{S} \times \mathcal{A}$ with assumption $|\mathcal{A}| < \infty$. 
\end{lemma}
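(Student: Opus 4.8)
The plan is to follow the soft policy improvement argument of \cite{haarnoja2018soft}, exploiting the single fact that $\pi_{k+1}$ is, by construction, a minimizer of the KL objective in Equation.~(\ref{equ:soft_policy_improvement}) over $\Pi$, whereas $\pi_k$ is itself a feasible competitor in that same minimization. First I would write the objective as $J_{\pi_k}(\pi'(\cdot|\boldsymbol{s}_t)) = \mathbb{E}_{\boldsymbol{a}_t \sim \pi'}\!\left[\log \pi'(\boldsymbol{a}_t|\boldsymbol{s}_t) - Q^{\pi_k}(\boldsymbol{s}_t, \boldsymbol{a}_t) + \log \Phi^{\pi_k}(\boldsymbol{s}_t)\right]$ and record the optimality inequality $J_{\pi_k}(\pi_{k+1}(\cdot|\boldsymbol{s}_t)) \le J_{\pi_k}(\pi_k(\cdot|\boldsymbol{s}_t))$. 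Because $\Phi^{\pi_k}(\boldsymbol{s}_t)$ does not depend on the action and both $\pi_k, \pi_{k+1}$ are normalized, the partition term contributes the same constant to both sides and cancels.

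Rearranging the surviving terms and recalling that $\mathbb{E}_{\boldsymbol{a}_t \sim \pi}[-\log \pi(\boldsymbol{a}_t|\boldsymbol{s}_t)] = \mathcal{H}(\pi(\cdot|\boldsymbol{s}_t))$ yields the central one-step inequality
\[
\mathbb{E}_{\boldsymbol{a}_t \sim \pi_{k+1}}\!\left[Q^{\pi_k}(\boldsymbol{s}_t, \boldsymbol{a}_t)\right] + \mathcal{H}(\pi_{k+1}(\cdot|\boldsymbol{s}_t)) \ge V^{\pi_k}(\boldsymbol{s}_t),
\]
where $V^{\pi_k}(\boldsymbol{s}_t) \triangleq \mathbb{E}_{\boldsymbol{a}_t \sim \pi_k}[Q^{\pi_k}(\boldsymbol{s}_t, \boldsymbol{a}_t)] + \mathcal{H}(\pi_k(\cdot|\boldsymbol{s}_t))$ is the soft state value of the current policy. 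Intuitively, this says that acting according to $\pi_{k+1}$ for one step and then continuing to evaluate with $Q^{\pi_k}$ never decreases the soft value at any state.

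Next I would bootstrap this one-step gain into the claimed inequality $Q^{\pi_{k+1}} \ge Q^{\pi_k}$ everywhere. Starting from the soft Bellman relation $Q^{\pi_k}(\boldsymbol{s}_t, \boldsymbol{a}_t) = \mathcal{R}(\boldsymbol{s}_t, \boldsymbol{a}_t) + \gamma\, \mathbb{E}_{\boldsymbol{s}_{t+1}}[V^{\pi_k}(\boldsymbol{s}_{t+1})]$, I substitute the one-step inequality at $\boldsymbol{s}_{t+1}$ to lower-bound $V^{\pi_k}(\boldsymbol{s}_{t+1})$ by the corresponding expectation taken under $\pi_{k+1}$, then re-expand the newly appearing $Q^{\pi_k}(\boldsymbol{s}_{t+1}, \boldsymbol{a}_{t+1})$ with the same Bellman relation, and iterate. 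Each round of substitution pushes the use of $\pi_k$ one step further into the future and replaces it with $\pi_{k+1}$, so the chain of lower bounds converges to the fixed point of the soft Bellman backup for $\pi_{k+1}$, namely $Q^{\pi_{k+1}}(\boldsymbol{s}_t, \boldsymbol{a}_t)$, establishing the desired inequality.

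The step I expect to be the main obstacle is rigorously justifying that this repeated substitution converges, rather than merely producing an ever-growing chain of bounds. Closing it requires that the iterated soft Bellman backup indeed converge to $Q^{\pi_{k+1}}$, which is exactly the content of Lemma~\ref{lemma:soft_policy_evaluation}, and this in turn needs the entropy-augmented reward to remain bounded; here the hypothesis $|\ma| < \infty$ is essential, since it keeps each term $\mathcal{H}(\pi(\cdot|\boldsymbol{s}))$ finite so that the discount factor $\gamma$ can control the tail of the telescoping series. A secondary point I would be careful about is that the cancellation of $\log \Phi^{\pi_k}$ and the identification of the right-hand side with $V^{\pi_k}$ both rely on $\pi_k$ and $\pi_{k+1}$ being genuine normalized distributions, so I would make the normalization explicit when expanding the KL divergences.
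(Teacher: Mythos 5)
Your proposal is correct and follows essentially the same route as the paper's own proof: use the optimality of $\pi_{k+1}$ against the feasible competitor $\pi_k$, cancel the partition function, obtain the one-step inequality $\mathbb{E}_{\boldsymbol{a}_t \sim \pi_{k+1}}[Q^{\pi_k}(\boldsymbol{s}_t,\boldsymbol{a}_t) - \log\pi_{k+1}(\boldsymbol{a}_t|\boldsymbol{s}_t)] \ge V^{\pi_k}(\boldsymbol{s}_t)$, and then iterate the soft Bellman expansion, appealing to Lemma~\ref{lemma:soft_policy_evaluation} for convergence of the chain to $Q^{\pi_{k+1}}$. The convergence concern you flag is handled in the paper exactly as you propose, by citing the soft policy evaluation lemma under the $|\mathcal{A}| < \infty$ assumption.
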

\begin{proof}
See Supplementary Material.~\ref{proof:soft_policy_improvement}. 
\end{proof}

The full soft policy iteration(Theorem.~1) is derived similar as SAC \cite{haarnoja2018soft} to replace the soft policy evaluation and soft policy improvement. 
To perform the continuous control tasks, the soft action value and policy of SAC need to be estimated by the function approximation. 
\begin{theorem}
\label{theorem:soft_policy_iteration}
(Soft Policy Iteration).  
Repeat soft policy evaluation and soft policy improvement alternately, 
start from any initial policy $\pi \in \Pi$ will converges to a optimal policy $\pi^{*}$ with $Q^{\pi^{*}}(\boldsymbol{s}_t, \boldsymbol{a}_t) \ge Q^{\pi}(\boldsymbol{s}_t, \boldsymbol{a}_t)$ for all $\pi \in \Pi$ and all $(\boldsymbol{s}_t, \boldsymbol{a}_t) \in \mathcal{S} \times \mathcal{A}$ with assumption $|\mathcal{A}| < \infty$. 
\end{theorem}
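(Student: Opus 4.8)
The plan is to mirror the classical policy-iteration convergence argument, now cast in the soft (entropy-augmented) setting, by combining the two lemmas already established. The idea is to show that alternating the two steps produces a sequence of policies whose soft action values are pointwise nondecreasing and uniformly bounded above, hence convergent, and then to argue that the limiting policy is not merely a fixed point of the improvement step but is in fact globally optimal within $\Pi$.

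First I would fix an arbitrary initial policy $\pi_0 \in \Pi$ and generate the sequence $\{\pi_k\}$ by alternating the two operations: soft policy evaluation (Lemma~\ref{lemma:soft_policy_evaluation}) supplies the exact soft action value $Q^{\pi_k}$ for the current policy, and soft policy improvement (Lemma~\ref{lemma:soft_policy_improvement}) returns $\pi_{k+1}$ as the minimizer of the KL objective in Equation~(\ref{equ:soft_policy_improvement}). Lemma~\ref{lemma:soft_policy_improvement} gives directly that $Q^{\pi_{k+1}}(\boldsymbol{s}_t, \boldsymbol{a}_t) \ge Q^{\pi_k}(\boldsymbol{s}_t, \boldsymbol{a}_t)$ for every $(\boldsymbol{s}_t, \boldsymbol{a}_t)$, so the sequence $\{Q^{\pi_k}\}$ is pointwise nondecreasing. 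Next I would invoke the assumption $|\mathcal{A}| < \infty$: with a finite action set the policy entropy is bounded by $\log|\mathcal{A}|$ and the reward is bounded, so the entropy-augmented reward $\mathcal{R}^{\pi}$ is bounded and therefore every $Q^{\pi}$ with $\pi \in \Pi$ is bounded above, uniformly. A pointwise nondecreasing sequence that is bounded above converges, so $Q^{\pi_k}$ converges pointwise to a limit value attained by some limiting policy $\pi^{*} \in \Pi$, which is well defined because each improvement step stays inside $\Pi$.

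The main obstacle is the last step: passing from ``the improvement step can no longer raise the value'' to ``the value is globally optimal.'' At the limit, $\pi^{*}$ must be a fixed point of the improvement operator, i.e. it already minimizes the objective in Equation~(\ref{equ:soft_policy_improvement}) for its own soft value $Q^{\pi^{*}}$. I would then replay the inequality chain used in the proof of Lemma~\ref{lemma:soft_policy_improvement}, but applied to an \emph{arbitrary} competitor $\pi \in \Pi$ rather than only to the predecessor $\pi_k$ in the sequence: since $\pi^{*}$ is the minimizer of that objective, the same telescoping soft-Bellman argument yields $Q^{\pi^{*}}(\boldsymbol{s}_t, \boldsymbol{a}_t) \ge Q^{\pi}(\boldsymbol{s}_t, \boldsymbol{a}_t)$ for every $(\boldsymbol{s}_t, \boldsymbol{a}_t) \in \mathcal{S} \times \mathcal{A}$ and every $\pi \in \Pi$. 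This closes the gap between a local fixed point and a global optimum and establishes the claim. The delicate point to check carefully is that the telescoping bound of Lemma~\ref{lemma:soft_policy_improvement} genuinely holds against any policy at the fixed point, not just against the predecessor in the iteration, which is exactly what the minimizer property of $\pi^{*}$ guarantees.
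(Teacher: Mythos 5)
Your proposal is correct and follows essentially the same route as the paper's own proof: monotonicity of $Q^{\pi_k}$ from Lemma~\ref{lemma:soft_policy_improvement}, boundedness from $|\mathcal{A}|<\infty$ to get convergence, and then re-running the telescoping soft-Bellman inequality of Lemma~\ref{lemma:soft_policy_improvement} against an arbitrary competitor $\pi\in\Pi$ using the minimizer property of $\pi^{*}$ at the fixed point. Your write-up is in fact slightly more careful than the paper's (e.g., spelling out why the entropy-augmented return is bounded, and flagging that the key step is applying the improvement inequality to arbitrary competitors rather than only to the predecessor), but the argument is the same.
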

\begin{proof}
See Supplementary Material.~\ref{proof:soft_policy_iteration}. 
\end{proof}

\subsection{Mutual information}
MI maximization has been researched to learn the interpretable representation\cite{chen2016infogan} and achieve effective exploration for continuous control RL\cite{houthooft2016vime}. 
MI $\mathcal{I}(X, Y)$ denotes the amount of information between random variable $X$ and $Y$, calculated as:
\begin{equation}
    \mathcal{I}(X, Y) = \mathcal{H}(Y) - \mathcal{H}(Y|X)
\end{equation}
where $\mathcal{H}(Y)$ and $\mathcal{H}(Y|X)$ represents the entropy and conditional entropy, respectively. 

\section{SAC-AWMP}\label{sec:algorithm}
Some previous work aims to learn the hierarchical policy with the latent variable through dividing the state space. 
Instead, in this paper, the state-action space is divided corresponding to the mode of advantage function. 
Firstly, the prior network is implemented to learn the weights of AWMP based on a advantage-weighted information maximization objective.  
Secondly, we introduce how to learn the AWMP on \emph{off-policy} data with the designed maximum entropy objective. 

\subsection{Prior network}  
The prior network $\mathcal{P}(\boldsymbol{h}|\boldsymbol{s}, \boldsymbol{a}; \boldsymbol{\eta})$ with a \emph{softmax} output layer is parameterized by $\boldsymbol{\eta}$ to obtain the weights $\boldsymbol{h} \in \mathbb{R}^{\mathcal{O}}$, $\mathcal{O}$ is the number of policy components in AWMP. 
To maximize the MI of state-action pairs and weights, the parameters $\boldsymbol{\eta}$ are updated through minimizing a regularized objective\cite{krause2010discriminative}, as: 
\begin{equation}\label{equ-mi-objective}
    \mathcal{J}_{\boldsymbol{\eta}} (\boldsymbol{\eta}) = \ell(\boldsymbol{\eta}) - \zeta \mathcal{I} (\boldsymbol{h}, (\boldsymbol{s}, \boldsymbol{a});\boldsymbol{\eta}) 
\end{equation}
where the regularization term $\ell(\boldsymbol{\eta})$ generally is calculated via $\rm{D_{KL}} \big(\mathcal{P}(\boldsymbol{h}|\Tilde{\boldsymbol{s}}, \Tilde{\boldsymbol{a}}; \boldsymbol{\eta}) || \mathcal{P}(\boldsymbol{h}|\boldsymbol{s}, \boldsymbol{a}; \boldsymbol{\eta}) \big)$ to penalize the instability against the perturbation. 
$\zeta$ is a coefficient to balance the performance. 
The improvement of regularization term trick has been verified in many learning representation tasks\cite{osa2019hierarchical}. 

The MI in Equation.~(\ref{equ-mi-objective}) denotes as: 
\begin{equation}
    \mathcal{I} (\boldsymbol{h}, (\boldsymbol{s}, \boldsymbol{a}); \boldsymbol{\eta}) = \mathcal{H}(\boldsymbol{h}; \boldsymbol{\eta}) - \mathcal{H}(\boldsymbol{h} |\boldsymbol{s}, \boldsymbol{a}; \boldsymbol{\eta})
\end{equation}
where the entropy $\mathcal{H}(\boldsymbol{h}; \boldsymbol{\eta})$ is estimated by: 
\begin{equation}
    \mathcal{H}(\boldsymbol{h};\boldsymbol{\eta}) = - \int \mathcal{P}(\boldsymbol{h}; \boldsymbol{\eta}) \log \mathcal{P}(\boldsymbol{h}; \boldsymbol{\eta}) d \boldsymbol{h}
\end{equation}
where $\mathcal{P}(\boldsymbol{h}; \boldsymbol{\eta})$ denotes the probability of weights derived from the probability density $\mathcal{P}_{A} (\boldsymbol{s}, \boldsymbol{a})$, as:
\begin{equation}\label{equ:weight_probability}
    \mathcal{P}(\boldsymbol{h}; \boldsymbol{\eta}) = \int \mathcal{P}_{A} (\boldsymbol{s}, \boldsymbol{a}) \mathcal{P}(\boldsymbol{h} | \boldsymbol{s}, \boldsymbol{a}; \boldsymbol{\eta}) d\boldsymbol{a} d\boldsymbol{s}
\end{equation}
where $\mathcal{P}_{A} (\boldsymbol{s}, \boldsymbol{a})$ denotes the probability density of state-action pair $(\boldsymbol{s}, \boldsymbol{a})$ induced by a policy $\pi^A$ based on the mode of advantage function as $\digamma (A^{\pi} (\boldsymbol{s}, \boldsymbol{a}))$. 
We consider a formulation as $\digamma (A^{\pi} (\boldsymbol{s}, \boldsymbol{a})) = \exp{(A^{\pi} (\boldsymbol{s}, \boldsymbol{a}))} / \Phi(\boldsymbol{s}, \boldsymbol{a})$, which can meet the requirement that the state-action pair with larger advantage value given a higher probability.
$\Phi(\boldsymbol{s}, \boldsymbol{a})$ denotes the partition function. 

Likewise, the conditional entropy $\mathcal{H}(\boldsymbol{h} |\boldsymbol{s}, \boldsymbol{a}; \boldsymbol{\eta})$ is estimated by:
\begin{equation}
\resizebox{.99\linewidth}{!}{$
    \displaystyle
    \mathcal{H}(\boldsymbol{h} |\boldsymbol{s}, \boldsymbol{a}; \boldsymbol{\eta}) = \int \mathcal{P}_{A} (\boldsymbol{s}, \boldsymbol{a}) \mathcal{P}(\boldsymbol{h} | (\boldsymbol{s}, \boldsymbol{a}); \boldsymbol{\eta}) \log \mathcal{P}(\boldsymbol{h} | (\boldsymbol{s}, \boldsymbol{a}); \boldsymbol{\eta}) d\boldsymbol{a} d\boldsymbol{s}
$}
\end{equation}

In practice, it is not available to estimate the probability density $\mathcal{P}_{A} (\boldsymbol{s}, \boldsymbol{a})$ from past experience, to solve this issue, the advantage-weighted importance sampling approach is introduced. 
$\mu (\boldsymbol{a} | \boldsymbol{s})$ denotes the behavior policy to generate the experience. 
We assume that the state distribution only changes sufficiently small resulting in $d^{\pi^A}(\boldsymbol{s}) \approx d^{\mu}(\boldsymbol{s})$, 
The advantage-weighted importance sampling weights are calculated as: 
\begin{equation}
\varpi (\boldsymbol{s}, \boldsymbol{a}) = \frac{\mathcal{P}_{A} (\boldsymbol{s}, \boldsymbol{a})}{\mathcal{P}_{\mu} (\boldsymbol{s}, \boldsymbol{a})} = \frac{d^{\pi^A}(\boldsymbol{s}) \pi^A(\boldsymbol{s}, \boldsymbol{a}))}{d^{\mu}(\boldsymbol{s})\mu (\boldsymbol{a} | \boldsymbol{s})} \approx
\frac{\digamma (A^{\pi} (\boldsymbol{s}, \boldsymbol{a}))}{\mu (\boldsymbol{a} | \boldsymbol{s})}
\end{equation}

To improve the training stability, the advantage-weighted importance sampling weights are normalized as:
\begin{equation}\label{density-function}
    \hat{\varpi} (\boldsymbol{s}, \boldsymbol{a}) = \frac{\frac{\digamma (A^{\pi} (\boldsymbol{s}, \boldsymbol{a}))}{\mu (\boldsymbol{a} | \boldsymbol{s})}}{\sum_i^N \frac{\digamma (A^{\pi} (\boldsymbol{s}_i, \boldsymbol{a}_i))}{\mu (\boldsymbol{a}_i | \boldsymbol{s}_i)}} 
\end{equation}
where $N$ is size of samples in replay buffer $\mathcal{D}$.     
Based on advantaged-weights importance sampling in Equation.~(\ref{density-function}), then probability of weights in Equation.~(\ref{equ:weight_probability}) can be estimated by:
\begin{equation}
    \hat{\mathcal{P}}(\boldsymbol{h}; \boldsymbol{\eta}) = \mathbb{E}_{(\boldsymbol{s}, \boldsymbol{a}) \thicksim \mathcal{D}}[\hat{\varpi} (\boldsymbol{s}, \boldsymbol{a}) \mathcal{P}(\boldsymbol{h} | (\boldsymbol{s}, \boldsymbol{a}); \boldsymbol{\eta})]
\end{equation}
Therefore, the parameterized entropy is estimated by:
\begin{equation}
    \hat{\mathcal{H}}(\boldsymbol{h};\boldsymbol{\eta}) = - \int \hat{\mathcal{P}}(\boldsymbol{h}; \boldsymbol{\eta}) \log \hat{\mathcal{P}}(\boldsymbol{h}; \boldsymbol{\eta}) d \boldsymbol{h}
\end{equation}
Likewise, the parameterized conditional entropy $\hat{\mathcal{H}}(\boldsymbol{h} | (\boldsymbol{s}, \boldsymbol{a}); \boldsymbol{\eta})$ is estimated by:
\begin{equation}\label{entropy-objective}
\begin{aligned}
    \mathbb{E}_{(\boldsymbol{s}, \boldsymbol{a}) \thicksim \mathcal{D}}[ \varpi_A (\boldsymbol{s}, \boldsymbol{a}) \mathcal{P}(\boldsymbol{h} | (\boldsymbol{s}, \boldsymbol{a}); \boldsymbol{\eta}) \log \mathcal{P}(\boldsymbol{h} | (\boldsymbol{s}, \boldsymbol{a}); \boldsymbol{\eta})]
\end{aligned}
\end{equation}

\subsection{AWMP}
The AWMP network $\pi(\boldsymbol{a}|\boldsymbol{s}; \boldsymbol{\theta})$ is comprised of $\mathcal{G}$ policy components parameterized by $\boldsymbol{\theta}$. 
The maximum entropy objective to learn the AWMP denotes as: 
\begin{equation}\label{entropy-objective}
\resizebox{.99\linewidth}{!}{$
    \displaystyle
    \mathcal{J}(\pi) = \int \int d^{\pi}(\boldsymbol{s}) \pi(\boldsymbol{a}|\boldsymbol{s}, \boldsymbol{\theta}) \left(Q^{\pi} (\boldsymbol{s}, \boldsymbol{a}) + \alpha_{\pi} \mathcal{H}(\pi(\cdot | \boldsymbol{s}, \boldsymbol{\theta}))\right) d\boldsymbol{a}d\boldsymbol{s}
$}
\end{equation}
where $Q^{\pi} (\boldsymbol{s}, \boldsymbol{a})$ denotes the soft action value function induced by AWMP $\pi$;  
$\pi(\boldsymbol{a}|\boldsymbol{s}; \boldsymbol{\theta}) = \sum_{g \in \mathcal{G}} \rho(g |\boldsymbol{s}) \pi_g(\boldsymbol{a} | \boldsymbol{s}, g; \boldsymbol{\theta})$, here $\mathcal{G}$ is set possible value of $g$;  
$\rho(g |\boldsymbol{s}) $ denotes a \emph{gating} policy. 
$\pi_g(\boldsymbol{a}|\boldsymbol{s}, g; \boldsymbol{\theta}) \in \mathbb{R}^{|\mathcal{A}|}$ denotes a single policy component given $g$, a stochastic Gaussian policy; 
$\alpha_{\pi}$ denotes the entropy temperature to control the stochasticity of the AWMP. 

At each environment step, the action is sampled from the AWMP and executed to interact with environment. 
Given the state $\boldsymbol{s}$, 
the softmax \emph{gating} policy $\rho(g |\boldsymbol{s})$ is calculated by: 
\begin{equation}\label{gating-policy}
    \rho(g | \boldsymbol{s}) = \frac{\exp({Q_{\mathcal{G}}^{\pi} (\boldsymbol{s}, g)})} {\sum_{g \in \mathcal{G}}\exp \big( {Q_{\mathcal{G}}^{\pi} (\boldsymbol{s}, g)}\big)} 
\end{equation}
where $Q_{\mathcal{G}}^{\pi} (\boldsymbol{s}, g)$ named as soft option value \cite{bacon2017option} represents the conditional expectation of return following a given policy component $\pi_g (\boldsymbol{a}|\boldsymbol{s}, g; \boldsymbol{\theta})$, as: 
\begin{equation}\label{option-value-equ}
\begin{aligned}
    &Q_{\mathcal{G}}^{\pi} (\boldsymbol{s}, g) = \mathbb{E}_{\boldsymbol{a} \thicksim \pi_g(\cdot | \boldsymbol{s}, g)}[R_t| \boldsymbol{s}_t = \boldsymbol{s}, \boldsymbol{g}_t = g] \\
    &= \int \pi_g(\boldsymbol{a}|\boldsymbol{s}, g; \boldsymbol{\theta}) \big( Q^{\pi}(\boldsymbol{s}, \boldsymbol{a}) - \alpha_{\pi} \log \pi_g(\boldsymbol{a}|\boldsymbol{s}, g; \boldsymbol{\theta}) \big) d\boldsymbol{a}
\end{aligned}
\end{equation}

The soft state value induced by AWMP $\pi$ is derived as:  
\begin{equation}
\begin{aligned}
    V^{\pi} (\boldsymbol{s}) =& \mathbb{E}_{g \thicksim \rho( \cdot | \boldsymbol{s})} \big[ Q^{\pi}_{\mathcal{G}}(\boldsymbol{s}, g) - \alpha_g \log \rho(g |\boldsymbol{s}) \big]  \\
    = &\int \rho(g |\boldsymbol{s}) \big( Q^{\pi}_{\mathcal{G}}(\boldsymbol{s}, g) - \alpha_g \log \rho(g |\boldsymbol{s}) \big) d g\\
\end{aligned}
\end{equation}
where $\alpha_g$ denotes the entropy temperature of the \emph{gating} policy.  
The advantage value is calculated as:
\ea{
    A^{\pi}(\boldsymbol{s}, \boldsymbol{a}) = Q^{\pi}(\boldsymbol{s}, \boldsymbol{a}) - V^{\pi}(\boldsymbol{s})
}

The function approximators are applied to estimate both the soft state-value $V^{\pi}(\boldsymbol{s})$ and soft action value $Q^{\pi}(\boldsymbol{s}, \boldsymbol{a})$. 
Based on the soft policy iteration theorem, 
the soft state value network $V(\boldsymbol{s}; \boldsymbol{\psi})$ with parameters $\boldsymbol{\psi}$ and the soft action value network $Q(\boldsymbol{s}, \boldsymbol{a}; \boldsymbol{w})$ with parameters $\boldsymbol{w}$ can be learned alternatively through stochastic gradient descent(SGD). 

The soft state value network $V(\boldsymbol{s}; \boldsymbol{\psi})$ is trained through minimizing the following error:  
\begin{equation}
\begin{aligned}
\mathcal{J}_V(\psi) = \mathbb{E}_{\boldsymbol{s}_t \thicksim \mathcal{D}}\left [\frac{1}{2}\Big(V(\boldsymbol{s}_t; \boldsymbol{\psi}) - \hat{V}(\boldsymbol{s}_t)
 \Big) ^2\right]
\end{aligned}
\end{equation}
where state $\boldsymbol{s}_t$ is sampled from the replay buffer $\mathcal{D}$ and the target value $\hat{V}(\boldsymbol{s}_t)$ is calculated by:
\begin{equation}
\resizebox{.99\linewidth}{!}{$
    \displaystyle
    \mathbb{E}_{h \thicksim \rho} \left[ \mathbb{E}_{\boldsymbol{a} \thicksim \pi_h}
    \left [ 
    Q(\boldsymbol{s}_t, \boldsymbol{a}; \boldsymbol{w}) - \alpha_{\pi} \log \pi_h(\boldsymbol{a}|\boldsymbol{s}_t, h; \boldsymbol{\theta}) \right] - \alpha_h \log \rho(h |\boldsymbol{s})  \right]
$}
\end{equation}
where the corresponding action $\boldsymbol{a}$ is sampled from the current AWMP. 
In particular, we implement two independent soft action value network $\left\{ Q (\boldsymbol{s}, \boldsymbol{a}; \boldsymbol{w}_1), Q (\boldsymbol{s}, \boldsymbol{a}; \boldsymbol{w}_2) \right\}$ and select the minimal one to calculate the target value $\hat{V}(\boldsymbol{s}_t)$, which has been demonstrated to reduce the effect of positive bias and improve the sample-efficiency in previous value-based work \cite{fujimoto2018addressing,zhang2019teach}.  
The parameter $\boldsymbol{w} \in \{ \boldsymbol{w}_1, \boldsymbol{w}_2 \}$ soft action value function is updated through minimizing the soft Bellman residual error: 
\begin{equation}
    \mathcal{J}_Q(\boldsymbol{w}) = \mathbb{E}_{(\boldsymbol{s}_t, \boldsymbol{a}_t) \thicksim \mathcal{D}}\left[\frac{1}{2}\Big(Q(\boldsymbol{s}_t, \boldsymbol{a}_t; \boldsymbol{w}) - \hat{Q}(\boldsymbol{s}_t, \boldsymbol{a}_t)\Big)^2 \right]
\end{equation}
where $(\boldsymbol{s}_t, \boldsymbol{a}_t)$ is sampled from replay buffer $\mathcal{D}$; the target value $\hat{Q}(\boldsymbol{s}_t, \boldsymbol{a}_t)$ is calculated by: 
\begin{equation}
    \hat{Q}(\boldsymbol{s}_t, \boldsymbol{a}_t) = \mathcal{R} (\boldsymbol{s}_t, \boldsymbol{a}_t) + \gamma \mathbb{E}_{\boldsymbol{s}_{t+1} \thicksim \mathcal{P}} \big[V(\boldsymbol{s}_{t+1}; \Bar{\psi} )\big]
\end{equation}
where $V(\boldsymbol{s}_{t+1}; \Bar{\psi} )$ is calculated from the target soft state value network parameterized by $\Bar{\psi}$. 

Instead of minimizing the objective in Equation.~(\ref{entropy-objective}) through the gradient backpropagating, we apply the likelihood ratio gradient estimator to learn the AWMP based on the off-policy data in replay buffer\cite{williams1992simple,haarnoja2018soft}.  
The objective is rewritten as: 
\begin{equation}\label{objective-function}
    \mathcal{J}_{\pi}(\boldsymbol{\theta}, \boldsymbol{w}) = \mathbb{E}_{\boldsymbol{s}_t \thicksim \mathcal{D}} \left[ \rm {D_{KL}} \left ( \pi(\cdot|\boldsymbol{s}_t; \boldsymbol{\theta}) || \frac{\exp(Q(\boldsymbol{s}_t, \cdot; \boldsymbol{w}, \boldsymbol{\theta})) }{\Phi(\boldsymbol{s}_t;\boldsymbol{w})} \right ) \right]
\end{equation}
which is utilized to minimize the expected KL-divergence with the target density induced by $Q(\boldsymbol{s}_t, \boldsymbol{a})$; 
$\boldsymbol{s}_t$ is sampled from the replay buffer $\mathcal{D}$ and $\boldsymbol{a}$ is derived from the current AWMP $\pi(\cdot|\boldsymbol{s}_t; \boldsymbol{\theta})$; 
the weights $h$ for AWMP is derived from the prior network $\mathcal{P}(\cdot|\boldsymbol{s}_t, \boldsymbol{a}; \boldsymbol{\eta})$; 
$\Phi(\boldsymbol{s}_t;\boldsymbol{w}) = \int \pi(\boldsymbol{a}|\boldsymbol{s}_t; \boldsymbol{\theta}) \exp(Q(\boldsymbol{s}_t, \boldsymbol{a}; \boldsymbol{w})) \boldsymbol{da}$ denotes the partition function. 
The soft action value network can be differentiated similar as the \emph{deterministic policy gradient}(DPG) theorem \cite{silver2014deterministic}.  
Therefore, a transformation trick is implemented to represent the AWMP with the weight $h$ from the prior network, as:
\ea{
    \boldsymbol{a}_t = \mathcal{F}(\epsilon_t; \boldsymbol{s}_t, \boldsymbol{\theta}) = \sum_i^{\mathcal{O}} h_i f_i (\epsilon_t; \boldsymbol{s}_t, \boldsymbol{\theta})
}
where the action $\boldsymbol{a}_t^i$ is derived from $i$th policy component and $h_i$ is the $i$th element of the weight. 
$\epsilon_t$ is the input noise sampled from a fixed Gaussian distribution $\mathcal{N}(\mathbf{0}, \mathbf{I}) \in \mathbb{R}^{|\mathcal{A}| \times \mathcal{O}}$. 
The objective function in Equation.~(\ref{objective-function}) is rewritten as:
\begin{equation}
\begin{aligned}
\mathcal{J}_{\pi}(\boldsymbol{\theta}, \boldsymbol{w}) = \mathbb{E}_{\boldsymbol{s}_t \thicksim \mathcal{D}, \epsilon_t \thicksim \mathcal{N}}
\big[ 
\log \pi(\mathcal{F}(\epsilon_t; \boldsymbol{s}_t, \boldsymbol{\theta}) |\boldsymbol{s}_t)
\\
- Q(\boldsymbol{s}_t, \mathcal{F}(\epsilon_t; \boldsymbol{s}_t, \boldsymbol{\theta}); \boldsymbol{w}) + \log \Phi(\boldsymbol{s}_t;\boldsymbol{w})
\big]
\end{aligned}
\end{equation}

The gradient of partition function $\Phi(\cdot)$ can be omitted due to be independent of parameters $\boldsymbol{\theta}$, the gradient with respect to the parameters $\boldsymbol{\theta}$ is calculated as: 
\ea{
&\hat{\nabla}_{\theta} \mathcal{J}_{\pi}(\boldsymbol{\theta}, \boldsymbol{w}) = \nabla_{\theta} \log \pi(\boldsymbol{a}_t|\boldsymbol{s}_t; \boldsymbol{\theta})
\\
&+ (\nabla_{\boldsymbol{a}_t}\log \pi(\boldsymbol{a}_t|\boldsymbol{s}_t; \boldsymbol{\theta}) - \nabla_{\boldsymbol{a}_t} Q(\boldsymbol{s}_t, \boldsymbol{a}_t; \boldsymbol{w}) \nabla_{\theta} \mathcal{F}(\epsilon_t; \boldsymbol{s}_t, \boldsymbol{\theta})
}

\subsection{Practice training}
Each policy component of the AWMP network will output the action from unbounded Gaussian distribution $\pi_g(\cdot)$, 
in practice, an invertible squashing function ($\tanh$) is applied to bound the Gaussian samples $\boldsymbol{u} \in \mathbb{R}^{|\mathcal{A}|}$ elementwise. 
Therefore, the action $\boldsymbol{a} = \tanh{(\boldsymbol{u})}$ will be restricted in $[-1, 1]^{|\mathcal{A}|}$. 
The density of action $\boldsymbol{a}$ induced by the Jacobian of the transformation, denotes as:
\begin{equation}
\begin{aligned}
    \pi_g(\boldsymbol{a}|\boldsymbol{s}, g) &= \mu(\boldsymbol{u}|\boldsymbol{s}, g) \Big|\det \big (\frac{d \boldsymbol{a}}{d \boldsymbol{u}} \big) \Big|^{-1} 
\end{aligned}
\end{equation}
Therefore, the action sampled from the AWMP weighted by the weights from prior network is still restricted in $[-1, 1]^{|\mathcal{A}|}$. 

Our proposed algorithm is summarised in Algorithm.~\ref{Soft Option-Critic}. 
At each gradient step, the soft action-value network, the soft state-value network and the AWMP network are trained on the mini-batch \emph{off-policy} samples $\mathcal{B}_{off}$ from replay buffer $\mathcal{D}$. 
To improve the training stability,  
the prior network is trained on \emph{semi off-policy} data, therefore we sample the mini-batch $\mathcal{B}_{on}$ from the most recent $M$ samples generated by the most recent behavior policies. 
Additionally, to sufficiently update the AWMP, a target soft state-action value network $Q(\boldsymbol{s}, \boldsymbol{a}; \Bar{\boldsymbol{w}})$ is implemented to obtain a less frequent \emph{gating} policy. 
The exponentially moving average with a smoothing constant $\tau_V$ and $\tau_Q$ is applied to update the target networks, respectively. 
\begin{algorithm}[!t]
  \begin{algorithmic}[1]
    \STATE \textbf{Input:} Number of policy components $\mathcal{O}$, size of replay buffer $\mathcal{D}$, size of mini-batch $\mathcal{B}_{on}$ and $\mathcal{B}_{off}$
    \STATE \textbf{Initialize:} $\mathcal{P}(\cdot|\cdot~, \cdot~; \boldsymbol{\eta})$, $\pi(\cdot~|\cdot~; \boldsymbol{\theta})$;  $Q(\cdot~, \cdot~ ;\boldsymbol{\omega})$, $Q(\cdot~, \cdot~ ;{\Bar{\boldsymbol{\omega}}})$;
    $V_{\psi}(\cdot~; \boldsymbol{\psi})$, $V_{\Bar{\psi}}(\cdot~; \Bar{\boldsymbol{\psi}})$
    \FOR {each iteration}
    \FOR {each environment step}
    \STATE $\boldsymbol{a}_t = \pi (\boldsymbol{a}_t|\boldsymbol{s}_t; \boldsymbol{\theta}) = \sum_{g \in \mathcal{G}} \rho( g | \boldsymbol{s}_t) \pi_g (\boldsymbol{a}_t | \boldsymbol{s}_t, g; \boldsymbol{\theta})$
    \STATE $R_{t+1}$, $\boldsymbol{s}_{t+1} \thicksim T (\boldsymbol{s}_{t+1} | \boldsymbol{s}_t, \boldsymbol{a}_t)$
    \STATE $\mathcal{D} \leftarrow \mathcal{D} \bigcup (\boldsymbol{s}_t, \boldsymbol{a}_t, R_{t+1}, \boldsymbol{s}_{t+1})$
    \ENDFOR
    \FOR {each prior network update step}
    \STATE Semi off-policy sample $\mathcal{B}_{on}$ from $ \mathcal{D}$
    \STATE $\boldsymbol{\eta} \leftarrow \boldsymbol{\eta} - \alpha_{\eta} \hat{\nabla}_{\boldsymbol{\eta}} \mathcal{J}_{\eta} (\eta)$
    \ENDFOR
    \FOR {each gradient step}
    \STATE Off-policy samples $\boldsymbol{s}_i \in \mathcal{B}_{off}$ from $ \mathcal{D}$
    \STATE $\boldsymbol{a}_i \thicksim \pi(\cdot | \boldsymbol{s}_i; \boldsymbol{\theta})$
    \STATE $h_{(i)} = \mathcal{P}(\cdot|\boldsymbol{s}_i, \boldsymbol{a}_i; \boldsymbol{\eta})$
    \STATE $\boldsymbol{\psi} \leftarrow \boldsymbol{\psi} - \alpha_{\psi}\hat{\nabla}_{\boldsymbol{\psi}}J_V(\boldsymbol{\psi})$
    \STATE $\boldsymbol{\omega} \leftarrow \boldsymbol{\omega} - \alpha_{\boldsymbol{\omega}}\hat{\nabla}_{\boldsymbol{\omega}}J_Q(\boldsymbol{\omega})$
    \STATE {$\boldsymbol{\theta} \leftarrow \boldsymbol{\theta} - \alpha_{\theta}\hat{\nabla}_{\boldsymbol{\theta}}J_{\pi}(\boldsymbol{\theta})$}
    \STATE Target network update
    \STATE {$\Bar{\boldsymbol{\psi}} \leftarrow \tau_V \boldsymbol{\psi} + (1 - \tau_V)\Bar{\boldsymbol{\psi}}$}
    \STATE {$\Bar{\boldsymbol{\omega}} \leftarrow \tau_Q\boldsymbol{\omega} + (1 - \tau_Q)\Bar{\boldsymbol{\omega}}$}
    \ENDFOR
    \ENDFOR
  \end{algorithmic}
  \caption{SAC-AWMP}
  \label{Soft Option-Critic} 
\end{algorithm}

\section{Experiments}\label{sec:experiments}
Our proposed SAC-AWMP is evaluated to understand sample complexity and stability compared to the previous state-of-the-art RL algorithms on four commonly used continuous control tasks of OpenAI Gym (see Fig.~\ref{fig:environments})\cite{todorov2012mujoco,brockman2016openai}. 
\begin{figure}[!t]
    \centering
    \includegraphics[width=\linewidth]{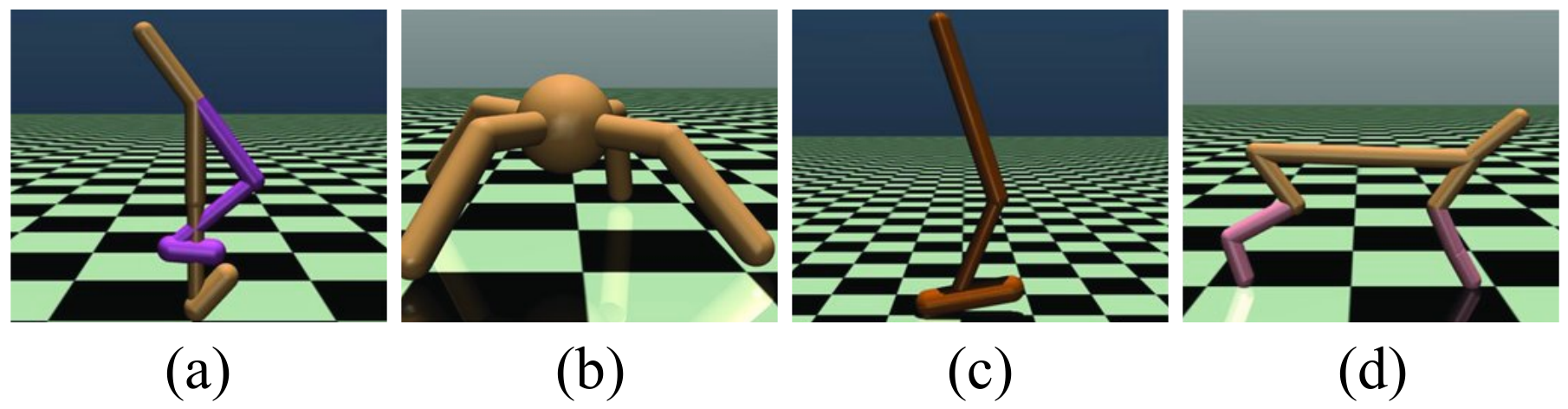}
    \caption{MuJoCo tasks. (a) Walker2d-v2, (b) Ant-v2, (c) Hopper-v2, (d) HalfCheetah-v2.}
    \label{fig:environments}
\end{figure}

\subsection{Settings}
The sample complexity of off-policy RL such as TD3 and SAC compared to the state-of-the-art on-policy algorithm PPO \cite{schulman2017proximal} has been done in TD3 and SAC\cite{fujimoto2018addressing,haarnoja2018soft}. 
Therefore, in this paper, our proposed SAC-AWMP is only implemented compared to TD3 and SAC, which are implemented with the provided code of authors. 
Most specifically, each policy component of SAC-AWMP has the same network architecture with SAC. 
Each algorithm for all the tasks is trained for five trials with different seed(0, 1, 2, 3, 4), each trial with 1 million steps, and the expected return is estimated via ten evaluation episodes every 1000 experiment steps. 
All hyperparameters used in our experiments refers to the original papers of SAC and TD3\cite{fujimoto2018addressing,haarnoja2018soft}, which are listed in Supplementary Material~B
and the source code is available on github\footnote{Code: \url{https://github.com/hzm2016/SAC-AWMP.git}}.  

\subsection{Results and Comparisons}
As shown in Figure~\ref{results_gmm_policy}, 
the solid curves represents the mean of the average evaluation and the shaded region corresponds to half a standard deviation of the evaluation over five seed. 
SAC-AWMP with four separate policy components can outperform than SAC in term of learning efficiency and stability(on Ant-v2, Walker2d-v2 and Hopper-v2), Halfcheetah-2 can be solved easily by all RL methods.
However, on the harder task Ant-v2, SAC-AWMP will outperform than SAC and TD3 largely. 
Obviously, SAC-AWMP and SAC can achieve better stability than TD3 on all the four tasks without any hyperparameters tuning. 
\begin{figure}[!t]
\centering
\includegraphics[width=3.6in]{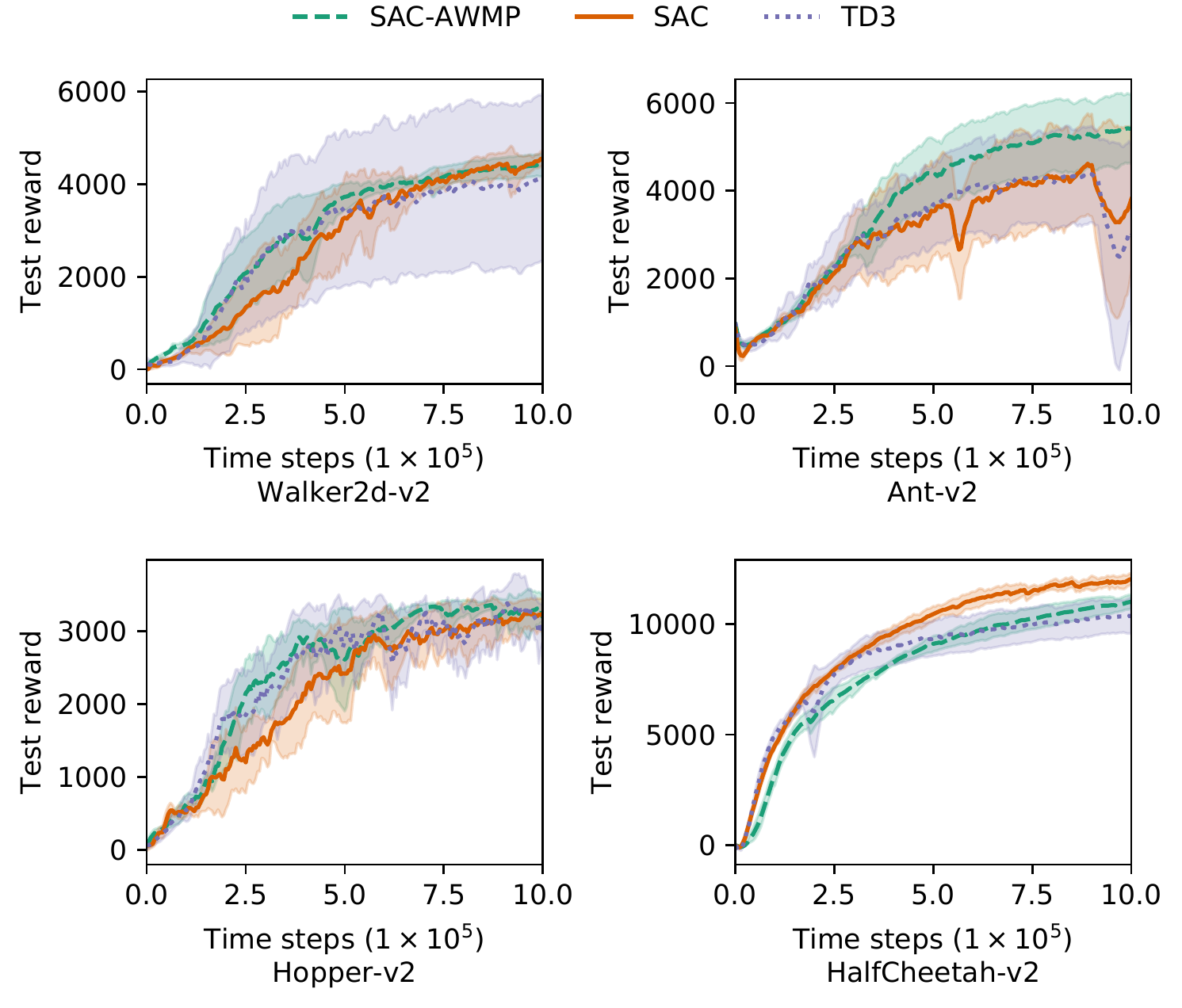}
\caption{Learning curves for the Mujoco continuous control tasks. 
Entropy temperature term $\alpha_{\pi} = 0.2$ and $\alpha_g = 0.001$. 
Number of policy components $\mathcal{O} = 4$.} 
\label{results_gmm_policy}
\end{figure}

\begin{figure}[!t]
\centering
\includegraphics[width=3.6in]{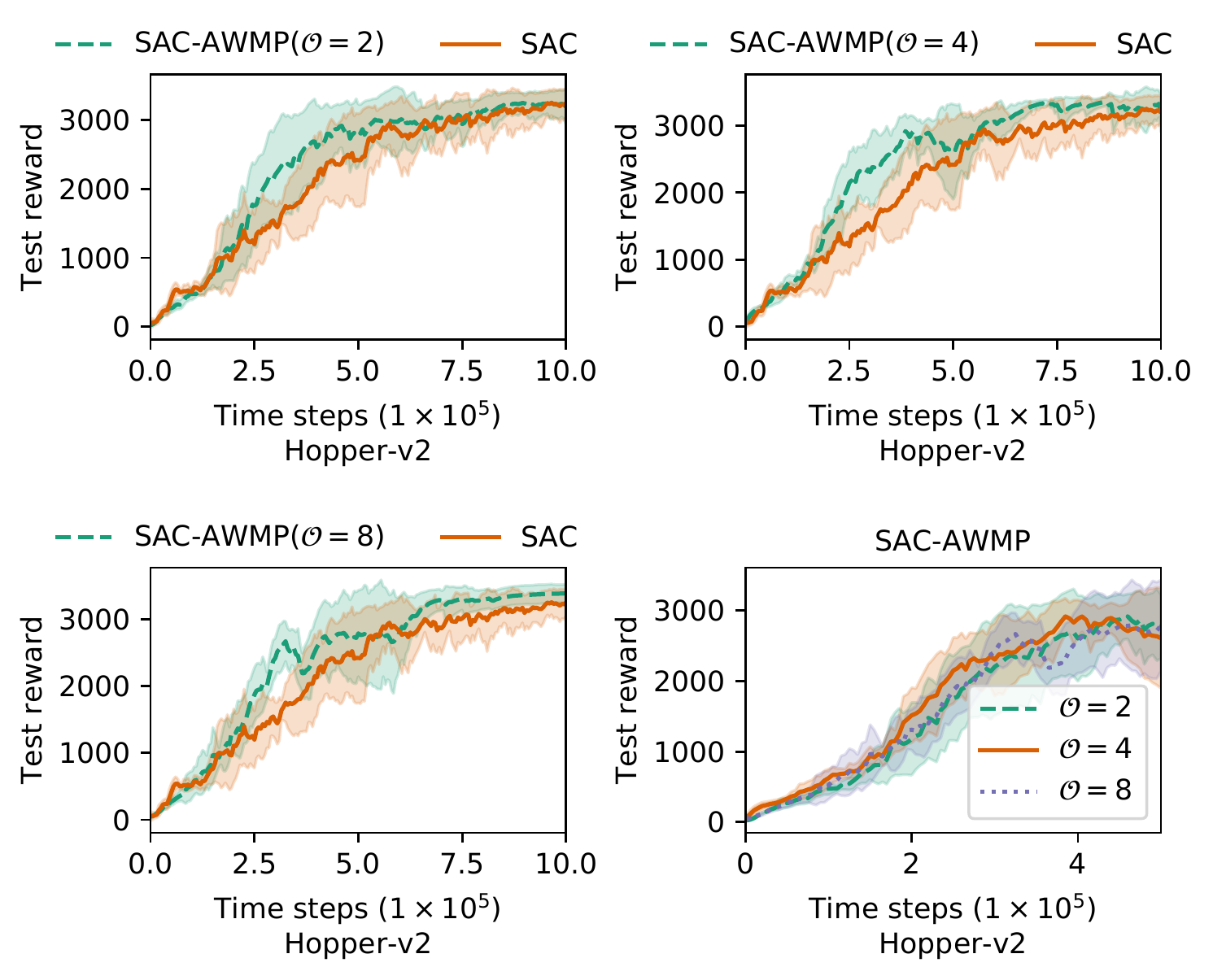} 
\caption{Learning curves for SAC-AWMP with different number of policy components. 
Entropy temperature term $\alpha_{\pi} = 0.2$ and $\alpha_g = 0.001$. } 
\label{aw-gmm-different-number}
\end{figure}
The number of policy components in SAC-AWMP need to be given, which is similar as the number of options and latent variables in hierarchical RL(HRL) \cite{bacon2017option,osa2019hierarchical,zhang2019dac}. 
How to discover the meaningful policy components and option policies corresponding to each latent variable is a long standing open question.  
The number of policy components should be tasks-dependent, which has not been investigated clearly. 
In previous HRL work, for all the continuous control tasks, two or four option policies were tested in \cite{osa2019hierarchical,zhang2019dac}. 
In this paper, our proposed SAC-AWMP is implemented with four different number of policy components $\mathcal{O} = 1, 2, 4, 8 \cdots$. 
When only one single policy component is applied, the proposed SAC-AWMP will degenerate as SAC \cite{haarnoja2018soft}. 
As shown in Figure~\ref{aw-gmm-different-number}, 
SAC-AWMP can outperform the SAC with three different number of policy components, and SAC-AWMP with 8 policy components could result in small variance during training.  
\begin{figure}[!t]
\centering
\includegraphics[width=3.6in]{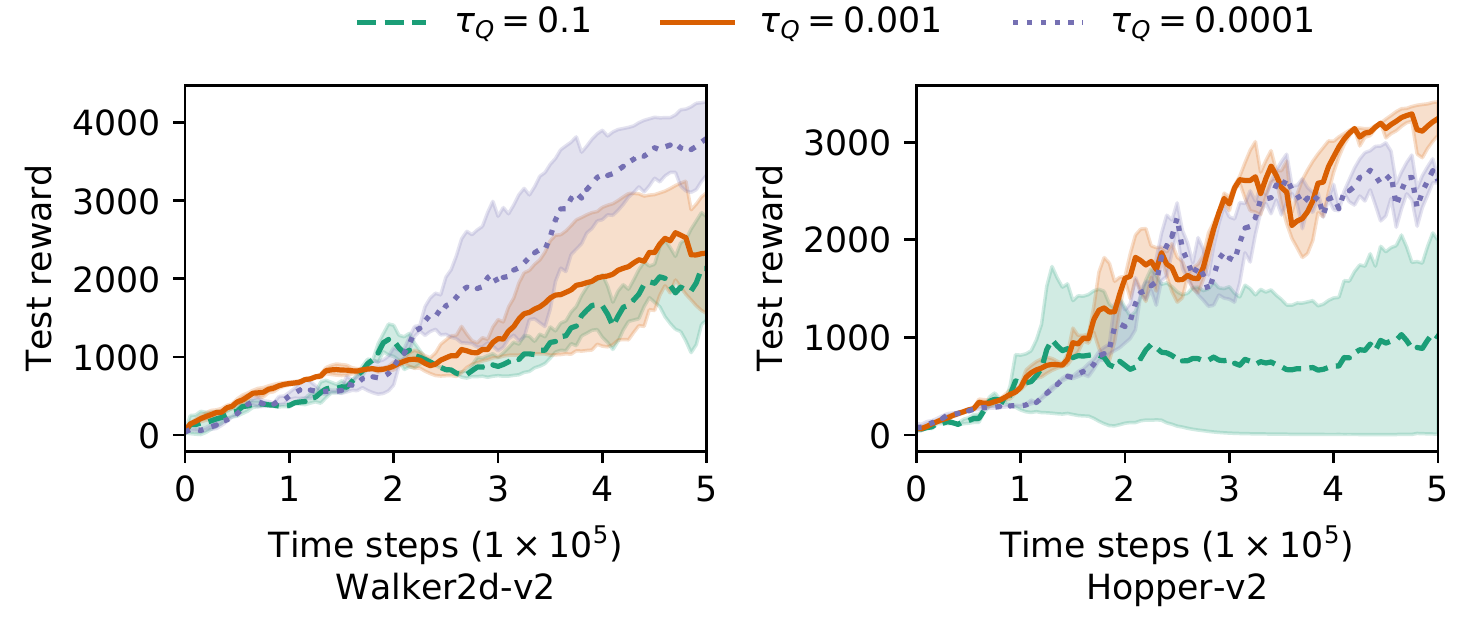}
\caption{Learning curves for SAC-AWMP with different smoothing coefficient.} 
\label{aw-gmm-target-smoothing-coefficient}
\end{figure}

The performance of maximum entropy RL largely depends on the entropy temperature, which is replaced by the reward scale in \cite{haarnoja2018soft}. 
The automating entropy adjustment varying across the different tasks and different learning stages proposed in \cite{haarnoja2018soft_applications}, however, it has not achieved too much improvement compared to fixed entropy given in \cite{haarnoja2018soft}. 
In this paper, the entropy temperature for each task is fixed same as in \cite{haarnoja2018soft}($\alpha_{\pi} = 0.2$ see Figure \ref{results_gmm_policy}).  
Target network is a commonly used trick to slowly track the changing value updated via a smoothing coefficient\cite{lillicrap2015continuous},   
which has largely improved learning stability of RL algorithms. 
As depicted in Algorithm 1, the AWMP could be learned on \emph{off-policy} data, however the prior network need to be learned on \emph{semi off-policy} data(generated by most recent policies). 
In addition to the target network for soft state value function, 
we implement a independent target network for soft action value to derive the \emph{gating} policy. 
The smoothing coefficient $\tau_Q = 0.001$ is applied for the above experiments, additionally, as Figure~\ref{aw-gmm-target-smoothing-coefficient}, we test other two different smoothing coefficients$\tau_Q = {0.1, 0.0001}$. 
Large smoothing coefficient $\tau_Q$ may result in instability and divergence, but small value will lead to slower learning. 

\section{Conclusion and discussion}
In this article, we proposed a soft actor-critic with advantage weighted mixture policy(SAC-AWMP), an off-policy maximum entropy RL algorithm. 
Without any specific hyperparameters tuning, we empirically demonstrate that the
proposed SAC-AWMP with wights learned via advantage-weighted information maximization can achieve more smooth policy approximation and stable learning than TD3,  
and improve the sample-efficiency performance of the typical SAC on three Mujoco tasks.  

Compared to the typical SAC with single stochastic Gaussian policy,  
the AWMP hold the promise to solve the complex tasks with high dimensional continuous state and action space or the real-world tasks with hierarchical structures. 
Actually our proposed AWMP can combine with any policy-gradient methods, such as PPO and TD3.  
Additionally, in this paper, the prior network could only be learned via 'semi off-policy' data. 
For better sample-efficiency and applicability, further investigation could be done in these directions. 

\section*{Acknowledgments}
This work was supported by Agency for Science, Technology and Research, Singapore, under the National Robotics Program, with A*star SERC Grant No.: 192 25 00054.

\bibliographystyle{named}
\bibliography{ijcai20,my_paper}

\begin{thebibliography}{}

\bibitem[\protect\citeauthoryear{Bacon \bgroup \em et al.\egroup
  }{2017}]{bacon2017option}
Pierre-Luc Bacon, Jean Harb, and Doina Precup.
\newblock The option-critic architecture.
\newblock In {\em Thirty-First AAAI Conference on Artificial Intelligence},
  2017.

\bibitem[\protect\citeauthoryear{Brockman \bgroup \em et al.\egroup
  }{2016}]{brockman2016openai}
Greg Brockman, Vicki Cheung, Ludwig Pettersson, Jonas Schneider, John Schulman,
  Jie Tang, and Wojciech Zaremba.
\newblock Openai gym.
\newblock {\em arXiv preprint arXiv:1606.01540}, 2016.

\bibitem[\protect\citeauthoryear{Chen \bgroup \em et al.\egroup
  }{2016}]{chen2016infogan}
Xi~Chen, Yan Duan, Rein Houthooft, John Schulman, Ilya Sutskever, and Pieter
  Abbeel.
\newblock Infogan: Interpretable representation learning by information
  maximizing generative adversarial nets.
\newblock In {\em Advances in neural information processing systems}, pages
  2172--2180, 2016.

\bibitem[\protect\citeauthoryear{Fujimoto \bgroup \em et al.\egroup
  }{2018}]{fujimoto2018addressing}
Scott Fujimoto, Herke Hoof, and David Meger.
\newblock Addressing function approximation error in actor-critic methods.
\newblock In {\em International Conference on Machine Learning}, pages
  1582--1591, 2018.

\bibitem[\protect\citeauthoryear{Haarnoja \bgroup \em et al.\egroup
  }{2018a}]{haarnoja2018soft}
Tuomas Haarnoja, Aurick Zhou, Pieter Abbeel, and Sergey Levine.
\newblock Soft actor-critic: Off-policy maximum entropy deep reinforcement
  learning with a stochastic actor.
\newblock {\em arXiv preprint arXiv:1801.01290}, 2018.

\bibitem[\protect\citeauthoryear{Haarnoja \bgroup \em et al.\egroup
  }{2018b}]{haarnoja2018soft_applications}
Tuomas Haarnoja, Aurick Zhou, Kristian Hartikainen, George Tucker, Sehoon Ha,
  Jie Tan, Vikash Kumar, Henry Zhu, Abhishek Gupta, Pieter Abbeel, et~al.
\newblock Soft actor-critic algorithms and applications.
\newblock {\em arXiv preprint arXiv:1812.05905}, 2018.

\bibitem[\protect\citeauthoryear{Houthooft \bgroup \em et al.\egroup
  }{2016}]{houthooft2016vime}
Rein Houthooft, Xi~Chen, Yan Duan, John Schulman, Filip De~Turck, and Pieter
  Abbeel.
\newblock Vime: Variational information maximizing exploration.
\newblock In {\em Advances in Neural Information Processing Systems}, pages
  1109--1117, 2016.

\bibitem[\protect\citeauthoryear{Krause \bgroup \em et al.\egroup
  }{2010}]{krause2010discriminative}
Andreas Krause, Pietro Perona, and Ryan~G Gomes.
\newblock Discriminative clustering by regularized information maximization.
\newblock In {\em Advances in neural information processing systems}, pages
  775--783, 2010.

\bibitem[\protect\citeauthoryear{Levine \bgroup \em et al.\egroup
  }{2016}]{levine2016end}
Sergey Levine, Chelsea Finn, Trevor Darrell, and Pieter Abbeel.
\newblock End-to-end training of deep visuomotor policies.
\newblock {\em The Journal of Machine Learning Research}, 17(1):1334--1373,
  2016.

\bibitem[\protect\citeauthoryear{Lillicrap \bgroup \em et al.\egroup
  }{2015}]{lillicrap2015continuous}
Timothy~P Lillicrap, Jonathan~J Hunt, Alexander Pritzel, Nicolas Heess, Tom
  Erez, Yuval Tassa, David Silver, and Daan Wierstra.
\newblock Continuous control with deep reinforcement learning.
\newblock {\em arXiv preprint arXiv:1509.02971}, 2015.

\bibitem[\protect\citeauthoryear{Mnih \bgroup \em et al.\egroup
  }{2015}]{mnih2015human}
Volodymyr Mnih, Koray Kavukcuoglu, David Silver, Andrei~A Rusu, Joel Veness,
  Marc~G Bellemare, Alex Graves, Martin Riedmiller, Andreas~K Fidjeland, Georg
  Ostrovski, et~al.
\newblock Human-level control through deep reinforcement learning.
\newblock {\em Nature}, 518(7540):529, 2015.

\bibitem[\protect\citeauthoryear{Osa \bgroup \em et al.\egroup
  }{2019}]{osa2019hierarchical}
Takayuki Osa, Voot Tangkaratt, and Masashi Sugiyama.
\newblock Hierarchical reinforcement learning via advantage-weighted
  information maximization.
\newblock {\em International Conference on Learning Representations}, 2019.

\bibitem[\protect\citeauthoryear{Schulman \bgroup \em et al.\egroup
  }{2017}]{schulman2017proximal}
John Schulman, Filip Wolski, Prafulla Dhariwal, Alec Radford, and Oleg Klimov.
\newblock Proximal policy optimization algorithms.
\newblock {\em arXiv preprint arXiv:1707.06347}, 2017.

\bibitem[\protect\citeauthoryear{Shazeer \bgroup \em et al.\egroup
  }{2017}]{shazeer2017outrageously}
Noam Shazeer, Azalia Mirhoseini, Krzysztof Maziarz, Andy Davis, Quoc Le,
  Geoffrey Hinton, and Jeff Dean.
\newblock Outrageously large neural networks: The sparsely-gated
  mixture-of-experts layer.
\newblock {\em arXiv preprint arXiv:1701.06538}, 2017.

\bibitem[\protect\citeauthoryear{Silver \bgroup \em et al.\egroup
  }{2014}]{silver2014deterministic}
David Silver, Guy Lever, Nicolas Heess, Thomas Degris, Daan Wierstra, and
  Martin Riedmiller.
\newblock Deterministic policy gradient algorithms.
\newblock In {\em International Conference on Machine Learning (ICML)}, 2014.

\bibitem[\protect\citeauthoryear{Silver \bgroup \em et al.\egroup
  }{2017}]{silver2017mastering}
David Silver, Julian Schrittwieser, Karen Simonyan, Ioannis Antonoglou, Aja
  Huang, Arthur Guez, Thomas Hubert, Lucas Baker, Matthew Lai, Adrian Bolton,
  et~al.
\newblock Mastering the game of go without human knowledge.
\newblock {\em nature}, 550(7676):354--359, 2017.

\bibitem[\protect\citeauthoryear{Sutton and
  Barto}{2018}]{sutton2018reinforcement}
Richard~S Sutton and Andrew~G Barto.
\newblock {\em Reinforcement learning: An introduction}.
\newblock MIT press, 2018.

\bibitem[\protect\citeauthoryear{Todorov \bgroup \em et al.\egroup
  }{2012}]{todorov2012mujoco}
Emanuel Todorov, Tom Erez, and Yuval Tassa.
\newblock Mujoco: A physics engine for model-based control.
\newblock In {\em 2012 IEEE/RSJ International Conference on Intelligent Robots
  and Systems}, pages 5026--5033. IEEE, 2012.

\bibitem[\protect\citeauthoryear{Wan \bgroup \em et al.\egroup
  }{2019}]{yi2019expectation_model}
Yi~Wan, Muhammad Zaheer, Adam White, Martha White, and Richard~S. Sutton.
\newblock Planning with expectation models.
\newblock {\em CoRR}, abs/1904.01191, 2019.

\bibitem[\protect\citeauthoryear{Williams}{1992}]{williams1992simple}
Ronald~J Williams.
\newblock Simple statistical gradient-following algorithms for connectionist
  reinforcement learning.
\newblock {\em Machine learning}, 8(3-4):229--256, 1992.

\bibitem[\protect\citeauthoryear{Xu \bgroup \em et al.\egroup
  }{1995}]{xu1995alternative}
Lei Xu, Michael~I Jordan, and Geoffrey~E Hinton.
\newblock An alternative model for mixtures of experts.
\newblock In {\em Advances in neural information processing systems}, pages
  633--640, 1995.

\bibitem[\protect\citeauthoryear{Xu \bgroup \em et al.\egroup
  }{2018}]{zhimin_2018_TII}
Jing Xu, Zhimin Hou, Wei Wang, Bohao Xu, Kuangen Zhang, and Ken Chen.
\newblock Feedback deep deterministic policy gradient with fuzzy reward for
  robotic multiple peg-in-hole assembly tasks.
\newblock {\em IEEE Transactions on Industrial Informatics}, 15(3):1658--1667,
  2018.

\bibitem[\protect\citeauthoryear{Zhang and Whiteson}{2019}]{zhang2019dac}
Shangtong Zhang and Shimon Whiteson.
\newblock Dac: The double actor-critic architecture for learning options.
\newblock {\em arXiv preprint arXiv:1904.12691}, 2019.

\bibitem[\protect\citeauthoryear{Zhang \bgroup \em et al.\egroup
  }{2019}]{zhang2019teach}
Kuangen Zhang, Zhimin Hou, Clarence~W de~Silva, Haoyong Yu, and Chenglong Fu.
\newblock Teach biped robots to walk via gait principles and reinforcement
  learning with adversarial critics.
\newblock {\em arXiv preprint arXiv:1910.10194}, 2019.

\bibitem[\protect\citeauthoryear{Ziebart \bgroup \em et al.\egroup
  }{2008}]{ziebart2008maximum}
Brian~D Ziebart, Andrew Maas, J~Andrew Bagnell, and Anind~K Dey.
\newblock Maximum entropy inverse reinforcement learning.
\newblock 2008.

\end{thebibliography}

\appendix

\section{Proofs}

\subsection{Lemma 2}\label{proof:soft_policy_improvement}
\begin{lemma_new}
(Soft Policy Improvement). 
Consider $\pi_k \in \Pi$ and let $\pi_{k+1}$ be the optimizer of the minimization problem in Equation(\ref{equ:soft_policy_improvement}). 
Then $Q^{\pi_{k+1}}(\boldsymbol{s}_t, \boldsymbol{a}_t) \ge Q^{\pi_k}(\boldsymbol{s}_t, \boldsymbol{a}_t)$ for all the $(\boldsymbol{s}_t, \boldsymbol{a}_t) \in \mathcal{S} \times \mathcal{A}$ with $|\mathcal{A}| < \infty$. 
\end{lemma_new}

\begin{proof}
Let $\pi_k$, $Q^{\pi_k}$ and $V^{\pi_k}$ denote the old policy, the soft state-action value and soft state value, and then $\pi_{k+1}$ as new policy is defined as Equation.~(\ref{equ:soft_policy_improvement}), rewritten as: 
\begin{equation}
    \pi_{k+1}(\cdot | \boldsymbol{s}_t) = \argmin_{\pi^{'} \in \Pi} \mathcal{J}_{\pi_k}(\pi^{'}(\cdot | \boldsymbol{s}_t)) 
\end{equation}
With $\pi_{k+1} \in \Pi$, it must be satisfied that $\mathcal{J}_{\pi_k}(\pi_{k+1} (\cdot | \boldsymbol{s}_t)) \leq \mathcal{J}_{\pi_k}(\pi_k (\cdot | \boldsymbol{s}_t))$. 
Hence
\begin{equation}
\begin{aligned}
    &\mathbb{E}_{\boldsymbol{a}_t \thicksim \pi_{k+1}}
    \left [\log \pi_{k+1}(\boldsymbol{a}_t|\boldsymbol{s}_t) - Q^{\pi_k}(\boldsymbol{s}_t, \boldsymbol{a}_t) + \log {\Phi^{\pi_k} (\boldsymbol{s}_t)} \right]\\
    &\leq 
    \mathbb{E}_{\boldsymbol{a}_t \thicksim \pi_{k}}
    \left [\log \pi_k(\boldsymbol{a}_t|\boldsymbol{s}_t) - Q^{\pi_k}(\boldsymbol{s}_t, \boldsymbol{a}_t) + \log {\Phi^{\pi_k} (\boldsymbol{s}_t)} \right]
\end{aligned}
\end{equation}
Since partition function $\Phi^{\pi_k} (\boldsymbol{s}_t)$ only depends on the state, the inequality reduces to:
\begin{equation}\label{equ:q-value-bound}
\begin{aligned}
    &\mathbb{E}_{\boldsymbol{a}_t \thicksim \pi_{k+1}}
    \left [Q^{\pi_k}(\boldsymbol{s}_t, \boldsymbol{a}_t) - \log \pi_{k+1}(\boldsymbol{a}_t|\boldsymbol{s}_t) \right] \ge V^{\pi_k}(\boldsymbol{s}_t) 
\end{aligned}
\end{equation}
Then, consider the soft Bellman equation: 
\begin{equation}
\begin{aligned}
    Q^{\pi_k}(\boldsymbol{s}_t, \boldsymbol{a}_t) &= \mathcal{R} (\boldsymbol{s}_t, \boldsymbol{a}_t) + \gamma \mathbb{E}_{\boldsymbol{s}_{t+1} \thicksim \mathcal{P}}[V^{\pi_k}(\boldsymbol{s}_{t+1})]
    \\
    &\leq \mathcal{R} (\boldsymbol{s}_t, \boldsymbol{a}_t) + \gamma \mathbb{E}_{\boldsymbol{s}_{t+1} \thicksim \mathcal{P}}\left[\mathbb{E}_{\boldsymbol{a}_t \thicksim \pi_{k+1}}\left [Q^{\pi_k}(\boldsymbol{s}_t, \boldsymbol{a}_t)\right.\right.
    \\
    & \left.\left. - \log \pi_{k+1}(\boldsymbol{a}_t|\boldsymbol{s}_t) \right ] \right]
    \\
    \vdots
    \\
    &\leq Q^{\pi_{k+1}}(\boldsymbol{s}_t, \boldsymbol{a}_t)
\end{aligned}
\end{equation}
where we expand the $Q^{\pi_k}(\boldsymbol{s}_t, \boldsymbol{a}_t)$ repeatedly by applying the soft Bellman equation and the bound in Equation.~(\ref{equ:q-value-bound}). Finally convergence to $Q^{\pi_{k+1}}(\boldsymbol{s}_t, \boldsymbol{a}_t)$ follows Lemma 1. 
\end{proof}

\subsection{Theorem 1}\label{proof:soft_policy_iteration}
\begin{theorem_new}
(Soft Policy Iteration).  
Repeat soft policy evaluation and soft improvement policy alternately, 
start from any initial policy $\pi \in \Pi$ will converges to optimal policy $\pi^{*}$ with $Q^{\pi^{*}}(\boldsymbol{s}_t, \boldsymbol{a}_t) \ge Q^{\pi}(\boldsymbol{s}_t, \boldsymbol{a}_t)$ for all $\pi \in \Pi$ and all the $(\boldsymbol{s}_t, \boldsymbol{a}_t) \in \mathcal{S} \times \mathcal{A}$ with $|\mathcal{A}| < \infty$. 
\end{theorem_new}

\begin{proof}
Let $\pi_k$ denote the policy at iteration $k$. Based on Lemma 2, the sequence $Q^{\pi_k}$ is monotonically increasing. 
The sequence $\pi_k$ will converge to some $\pi^{*}$ due to $Q^{\pi}$ is bounded above for $\pi \in \Pi$. 
At convergence, it must be case that $\mathcal{J}_{\pi^{*}}(\pi^{*} (\cdot | \boldsymbol{s}_t)) \leq \mathcal{J}_{\pi^{*}}(\pi (\cdot | \boldsymbol{s}_t))$ for all $\pi \in \Pi$. 
Based on the proof of Lemma 2, $Q^{\pi^{*}}(\boldsymbol{s}_t, \boldsymbol{a}_t) \ge Q^{\pi}(\boldsymbol{s}_t, \boldsymbol{a}_t)$ for all $(\boldsymbol{s}_t, \boldsymbol{a}_t) \in \mathcal{S} \times \mathcal{A}$. 
Hence, it must be case that $\pi^{*}$ is optimal in $\Pi$.  
\end{proof}

\section{Details of Experiments}\label{stylefiles}
\begin{table}[htbp]
\caption{Mujoco Environments Settings}\label{sample-table}
\begin{center}
\begin{tabular}{lcc}
\hline
\multicolumn{1}{l}{Description}  & \multicolumn{1}{l}{Action Dimensions} & \multicolumn{1}{l}{Entropy $\alpha_{\pi}$}
\\
\hline
Ant-v2      &  8 & 0.2 \\
HalfCheetah-v2  & 6  & 0.2\\
Walker2d-v2         & 6  & 0.2\\
Hopper-v2     & 3  & 0.2\\
\hline
\end{tabular}
\end{center}
\end{table}

\begin{table}[htbp]
\caption{Hyper-parameters of SAC}
\label{sample-table}
\begin{center}
\begin{tabular}{lcc}
\hline
\multicolumn{1}{l}{Description}  & \multicolumn{1}{l}{Symbol} & \multicolumn{1}{l}{ Value}
\\ 
\hline
Batch size for critic         &   & 100\\
Number of hidden layers       &   & (400, 400)\\
Activation function                     &  & Relu, Relu, tanh\\
Target smoothing coefficient       & $\tau_V$ & 0.005\\
Learning rate                 &   & 3e-4\\
Gradient Steps               &   & 1   \\
Replay buffer size            &   & 1e6 \\
Entropy term               &$\alpha_{\pi}$&  0.2\\
Optimizer                  &  & Adam \\
Discount factor            & $\gamma$ & 0.99 \\
\hline 
\end{tabular}
\end{center}
\end{table}

\begin{table}[htbp]
\caption{Additional hyper-parameters of SAC-AWMP}\label{sample-table}
\begin{center}
\begin{tabular}{lcc}
\hline
\multicolumn{1}{l}{Description}  & \multicolumn{1}{l}{Symbol} & \multicolumn{1}{l}{ Value}
\\
\hline 
Batch size for critic         &   & 100\\
Batch size for policy           &   & 200($\mathcal{O}=2$)\\
                                  &   & 400($\mathcal{O}=4$)\\
Batch size for prior network      &   & 50\\
Target smoothing coefficient        & $\tau_Q$ & 0.001\\
Prior network update steps                & $M$ & 5000\\
Learning rate    & & 3e-4\\
Noise for MI regularization        & & 0.04\\
Coefficient for MI                 & & 0.1\\
Entropy term               &$\alpha_g$&  0.001\\
\hline
\end{tabular}
\end{center}
\end{table}
\end{document}